\newtheorem{theorem}{Theorem}
\newtheorem{definition}{Defnition}
\newtheorem{property}{Property}
\title{Orthogonal and Idempotent Transformations for Learning Deep Neural Networks}
\author{
Jingdong Wang$^1$
\quad
Yajie Xing$^2$
\quad
Kexin Zhang$^2$
\quad
Cha Zhang$^1$
\\
$^1$ Microsoft Research
\quad
$^2$Peking University
}
\begin{document}
\maketitle
\begin{abstract}
Identity transformations,
used as skip-connections in residual networks,
directly
connect convolutional layers close
to the input
and those close to the output
in deep neural networks,
improving information flow
and thus easing the training.
In this paper,
we introduce two alternative linear transforms,
orthogonal transformation
and idempotent transformation.
According to the definition and property
of orthogonal and idempotent matrices,
the product of multiple orthogonal (same idempotent) matrices,
used to form linear transformations,
is equal to a single orthogonal (idempotent) matrix,
resulting in that information flow is improved
and the training is eased.
One interesting point is that the success essentially stems from
feature reuse
and gradient reuse in forward and backward propagation
for maintaining the information during flow
and eliminating the gradient vanishing problem
because of the express way through skip-connections.
We empirically demonstrate the effectiveness of the proposed
two transformations:
similar performance in single-branch networks
and even superior in multi-branch networks
in comparison to identity transformations.
\end{abstract}

\section{Introduction}
Training convolution neural networks
becomes more difficult
with the depth increasing
and even the training accuracy deceases
for very deep networks.
Identity mappings or transformations,
which are adopted as skip-connections
in deep residual networks~\cite{HeZRS16},
ease the training of very deep networks
and make the accuracy improved.

Identity transformations
lead to shorter connections
between layers close to the input and those close to the output.
It is shown that
identity transformations improve information flow
in both forward propagation and back-propagation
because
the product of identity matrices is still an identity matrix,
in other words,
\emph{multiple skip-connections is essentially like a single skip-connection}
no matter how many skip-connections there are.

In this paper,
we introduce two linear transformations
and use them as skip-connections
for improving information flow.
The first one is an orthogonal transformation.
Multiplying several orthogonal matrices,
used to form the orthogonal transformations,
yields
an orthogonal matrix.
The benefit is that
information attenuation and explosion is avoided
because the absolute values of the eigenvalues of an orthogonal matrix
are always $1$.
The second one is an idempotent transformation,
whose transformation matrix is an idempotent matrix
which, when multiplied by itself, yields itself.
A sequence of idempotent transformations
with the same idempotent matrices
is equivalent to a single idempotent transformation.
We show that the success essentially comes from
feature reuse
and gradient reuse in forward and backward propagation
for maintaining the information and eliminating the gradient vanishing problem
because of the express way through skip-connections.

The empirical results show that
single-branch deep neural networks with idempotent
and orthogonal transformations
as skip-connections achieve
perform similarly to
those with identity transformations
and that the performances are superior
when applied to multi-branch networks.

\section{Related Works}
In general, deeper convolutional neural networks leads to superior classification accuracy.
An example is the improvement
on the ImageNet classification
from AlexNet~\cite{KrizhevskySH12} ($7$ layers)
to VGGNet~\cite{SimonyanZ14a} ($19$ layers).
However,
going deeper increases the training difficulty.
Techniques to ease the training include optimization techniques~\cite{nair2010rectified, HeZRS15, clevert2015fast, IoffeS15, glorot2010understanding, mishkin2015all, neyshabur2015path}
and network architecture design.
In the following, we discuss representative works on network architecture design.

GoogLeNet~\cite{SzegedyLJSRAEVR15}
is one of the first works,
designing network architectures
to deal with the difficulty of training deep networks.
It is built by repeating Inception blocks
each of which contains short and long branches,
and thus there are both short and long paths
between layers close to the input layer
and those close to the output layer,
i.e., information flow is improved.

Inspired by Long Short-Term Memory recurrent networks, highway networks~\cite{srivastava2015training}
adopt identity transformations together with adaptive gating mechanism,
allowing computation paths along
which information can flow across many layers without attenuation.
It indeed eases the training of very deep networks, e.g., $100$ layers.
Residual networks~\cite{HeZRS16} also adopt identity transformations as skip-connections,
but without including gating units,
making training networks of thousands of layers easier.
In this paper, we introduce two alternative transformations,
orthogonal and idempotent transformations,
which also improve information flow.
We do not find that they learn residuals as claimed in~\cite{HeZRS16}
and but find that features and gradients are reused
through the express way composed of skip-connections.

FractalNets~\cite{LarssonMS16a}, deeply-fused nets~\cite{WangWZZ16}, and DenseNets~\cite{HuangLW16a}
present various multi-branch structures,
leading to short and long paths
between layers close to the input layer
and those close to the output layer.
Consequently, the effective depth~\cite{LarssonMS16a}
or the average depth~\cite{WangWZZ16} is reduced a lot
though the nominal depth is great
and accordingly information flow is improved.

Deep supervision~\cite{LeeXGZT15} associates a companion local output
and accordingly a loss function
with each hidden layer,
which results in shorter paths
from hidden layers
to the loss layers.
Its success provides an evidence
that effective depth is crucial.
FitNets~\cite{RomeroBKCGB14},
a student-teacher paradigm,
train a thinner and deeper student network
such that the intermediate representations
approach the intermediate representations of a wider and shallower (but still deep) teacher
network that is relatively easy to be trained,
which is in some sense a kind of deep supervision,
also reducing the effective depth.


\begin{figure}
\centering
\subfigure[]{\label{fig:blocks:original}\fbox{\includegraphics[height=0.25\linewidth]{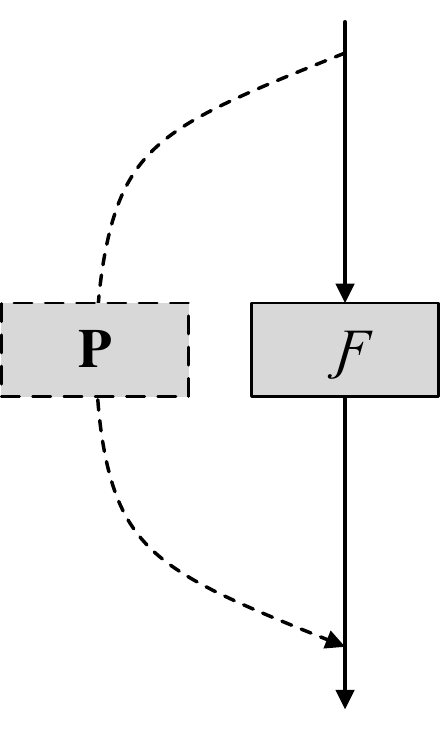}}}~~~~~~
\subfigure[]{\label{fig:blocks:orthogonal}\fbox{\includegraphics[height=0.25\linewidth]{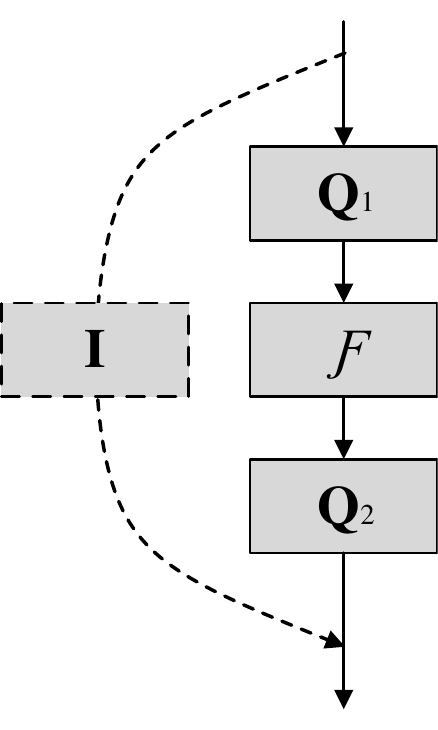}}}~~~~~~
\subfigure[]{\label{fig:blocks:idempotent}\fbox{\includegraphics[height=0.25\linewidth]{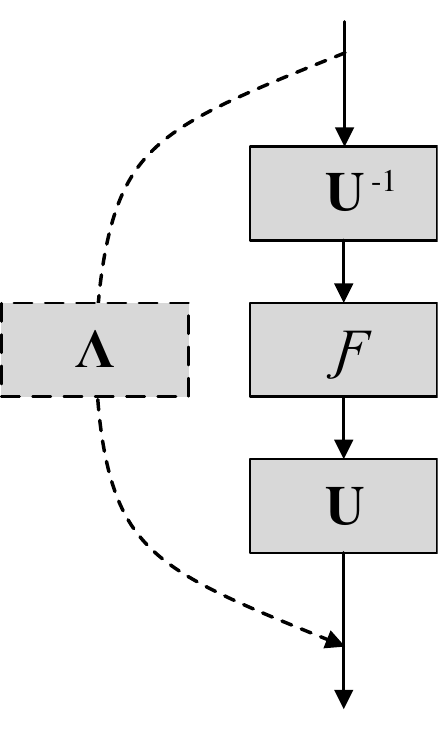}}}~~~~~~~~~
\subfigure[]{\label{fig:blocks:twobranch1}\fbox{\includegraphics[height=0.25\linewidth]{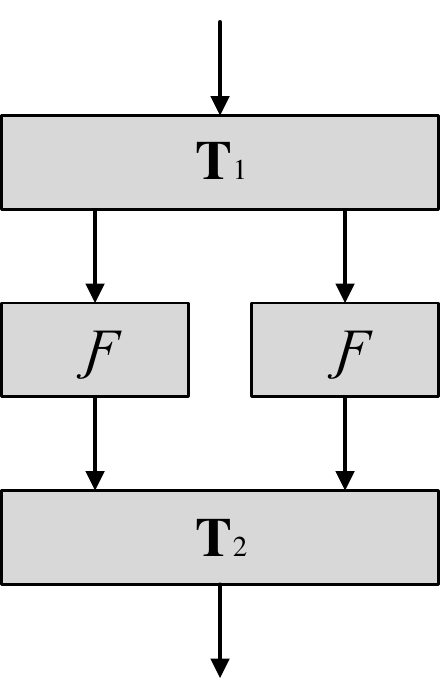}}}~~~~~~
\subfigure[]{\label{fig:blocks:twobranch2}\fbox{\includegraphics[height=0.25\linewidth]{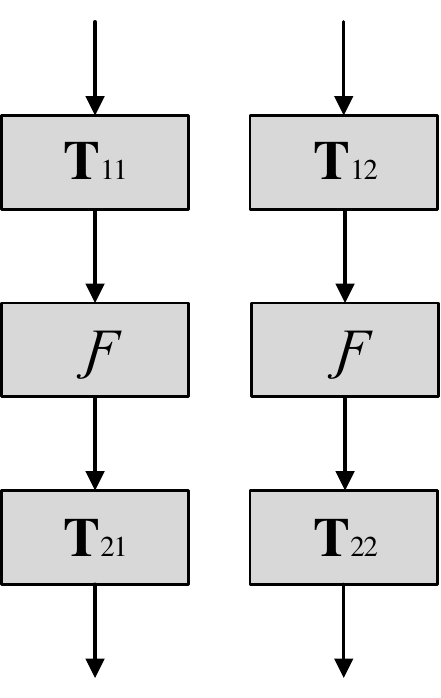}}}
\caption{(a) A building block with a linear transformation $\mathbf{P}$;
(b) An equivalent form for an orthogonal transformation;
$\mathbf{Q}_1$ and $\mathbf{Q}_2$
correspond to pre-orthogonal transformation
and post-orthogonal transformation.
(c) An equivalent form for an idempotent transformation.
(d) The regular connection with two branches converted
from an orthogonal or idempotent transformation,
where $\mathbf{T}_1$ and $\mathbf{T}_2$
are pre-linear transformation and post-linear transformation.
(e) Two separate branches,
where $\mathbf{T}_{11}$ ($\mathbf{T}_{12}$)
and $\mathbf{T}_{21}$ ($\mathbf{T}_{22}$)
are pre-linear transformation and post-linear transformation.
The network in (d) cannot be converted to that in (e)
for general pre- and post-transformations.
}
\label{fig:blocks}
\end{figure}

\section{Orthogonal and Idempotent Transformations}

A building block with a linear transformation
used as the skip-connection is written as:
\begin{align}
\mathbf{y} & = \mathbf{P}\mathbf{x} + \mathcal{F}(\mathbf{x}, \mathcal{W}).\label{eqn:buildingblock}
\end{align}
Here, $\mathbf{x}$ and $\mathbf{y}$ are the input and output features.
$\mathbf{P}\mathbf{x}$ is the skip-connection,
and $\mathbf{P}$ is the transformation matrix.
$\mathcal{F}$ is the regular connection,
e.g., two convolutional layers
with each followed by a ReLU activation function,
and $\mathcal{W}$ is the parameters of the function $\mathcal{F}$.
Following residual networks~\cite{HeZRS16},
we design the networks
by starting with a convolutional layer,
repeating such building blocks,
and appending a global pooling layer and a fully-connected layer.
Figure~\ref{fig:blocks:original} illustrates such a block.

The recursive equations below
show
how the features are forward-propagated
through building blocks
and the gradients are backward-propagated.

\noindent\textbf{Forward propagation.}
The transformation function,
transferring the feature $\mathbf{x}_m$, the input of the $m$th building block
to $\mathbf{x}_n$, the input of the $n$th building block,
is given as follows,
\begin{align}
\mathbf{x}_n = \mathbf{P}^{n-m}\mathbf{x}_m +
\sum\nolimits_{i=m}^{n-1}\mathbf{P}^{n-i-1} \mathbf{x}'_{i+1}, \label{eqn:forward}
\end{align}
where $\mathbf{x}'_{i+1} = \mathcal{F}(\mathbf{x}_i, \mathcal{W}_i)$,
and $\mathbf{x}_i$ and $\mathcal{W}_i$
are the input and the parameters of the $i$th block.

\noindent\textbf{Backward propagation.}
The gradient is backward-prorogated
from $\mathbf{x}_n$ to $\mathbf{x}_m$
as below,
\begin{align}
\frac{\partial \mathcal{L}}{\partial \mathbf{x}_m} = (\mathbf{P}^{n-m})^\top \frac{\partial \mathcal{L}}{\partial \mathbf{x}_n}   + \sum\nolimits_{i=m}^{n-1} \frac{\partial\mathbf{x}'_{i+1}}{\partial \mathbf{x}_m}
(\mathbf{P}^{n-i-1})^\top
\frac{\partial \mathcal{L}}{\partial \mathbf{x}_n} , \label{eqn:backward}
\end{align}
where $\mathcal{L}$ is the loss function.

In the following,
we show that
the feature $\mathbf{x}_m$ is \emph{reused} in any later feature $\mathbf{x}_n$
instead of only $\mathbf{x}_{m+1}$
and the gradient $\frac{\partial \mathcal{L}}{\partial \mathbf{x}_n}$
is \emph{reused} in any early gradient
$\frac{\partial \mathcal{L}}{\partial \mathbf{x}_m}$
such that \emph{the signal (information)
is maintained and
the vanishing problem is eliminated},
for identity transformations,
orthogonal transformations
and idempotent transformations.

\subsection{Identity transformations}
Identity transformations, i.e., $\mathbf{P} = \mathbf{I}$,
are adopted in residual networks~\cite{HeZRS16}.
The forward and backward processes are rewritten as below,
\begin{align}
\mathbf{x}_n & = \mathbf{x}_m + \sum\nolimits_{i=m}^{n-1} \mathbf{x}'_{i+1}, \label{eqn:forwardidentity} \\
\frac{\partial \mathcal{L}}{\partial \mathbf{x}_m} & = \frac{\partial \mathcal{L}}{\partial \mathbf{x}_n}  +  \sum\nolimits_{i=m}^{n-1}\frac{\partial\mathbf{x}'_{i+1}}{\partial \mathbf{x}_m} \frac{\partial \mathcal{L}}{\partial \mathbf{x}_n} . \label{eqn:backwardidentity}
\end{align}
It is obvious that
there is a path
along skip-connections,
where
(1) $\mathbf{x}_m$ directly flows to
$\mathbf{x}_n$ though there are $(n-m)$ blocks;
and (2)
the gradient with respect to $\mathbf{x}_n$
is directly backward sent to the gradient
with respect to $\mathbf{x}_m$
along the same path (both correspond to
the first term of the right-hand side
of the above two equations).

\subsection{Orthogonal transformations}
An orthogonal transformation is a linear transformation,
where the transformation matrix is orthogonal.
Mathematically,
a matrix $\mathbf{Q}$ is orthogonal
if $\mathbf{Q}^\top\mathbf{Q} = \mathbf{Q}\mathbf{Q}^\top = \mathbf{I}$.
We have the following property:
\begin{property}
The product of an arbitrary number of orthogonal matrices is orthogonal:
$\prod_{k=1}^K \mathbf{Q}_k$ is orthogonal,
if $\mathbf{Q}_1,~\mathbf{Q}_k,~\dots,~\mathbf{Q}_K$
are orthogonal.
\label{prop:orthogonalmatrixproduct}
\end{property}
Thus, the forward process (Equation~\ref{eqn:forward})
is rewritten as follows,
\begin{align}
\mathbf{x}_n = \mathbf{Q}_{n-m}\mathbf{x}_m +
\sum\nolimits_{i=m}^{n-1}\mathbf{Q}_{n-i-1} \mathbf{x}'_{i+1}, \label{eqn:forwardorthogonal}
\end{align}
where $\mathbf{Q}_{n-m} = \mathbf{Q}^{n-m}$
and $\mathbf{Q}_{n-i-1} = \mathbf{Q}^{n-i-1}$. We can see that
$\mathbf{x}_m$ is sent to $\mathbf{x}_n$
via a single orthogonal transformation (corresponding to the first term of the right-hand side)
One notable property is that
any orthogonal transformation $\mathbf{Q}$ preserves
the length of vectors:
$\|\mathbf{Q}\mathbf{x}\|_2 = \|\mathbf{x}\|_2$.
This means that
through the path formed by skip-connections,
\emph{the norm of the vector is maintained}.

The backward process (Equation~\ref{eqn:backward}) becomes
\begin{align}
\frac{\partial \mathcal{L}}{\partial \mathbf{x}_m} =  \mathbf{Q}_{n-m}^\top  \frac{\partial \mathcal{L}}{\partial \mathbf{x}_n}
+  \sum\nolimits_{i=m}^{n-1} \frac{\partial\mathbf{x}'_{i+1}}{\partial \mathbf{x}_m} \mathbf{Q}_{n-i-1}^\top \frac{\partial \mathcal{L}}{\partial \mathbf{x}_n}. \label{eqn:backwardorthogonal}
\end{align}
Again,
there is a path, formed by skip-connections
and behaving like \emph{a single orthogonal transformation layer},
where the gradient with respect to $\mathbf{x}_n$ is sent to
the gradient with respect to $\mathbf{x}_m$
no matter how many building blocks there are
between $\mathbf{x}_m$ and $\mathbf{x}_n$,
and \emph{the norm of the gradient is maintained}.

\noindent\textbf{Conversion to identity transformations.}
We show that
the orthogonal transformation can be \emph{absorbed}
into the regular connection
and the skip-connection is reduced to an identity transformation,
which is illustrated in Figure~\ref{fig:blocks:orthogonal}.

\begin{theorem}
For a network, which is formed
with orthogonal transformations as skip-connections,
there exists another network,which is formed
with identity transformations as skip-connections,
such that,
given an arbitrary input $\mathbf{x}_0$,
the final outputs of the two networks are the same.
\label{thm:equivalence}
\end{theorem}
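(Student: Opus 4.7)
My plan is to prove Theorem~\ref{thm:equivalence} by an explicit change of coordinates on the intermediate features of the building blocks, with a single compensating modification of the linear output head. Fix the original network: $N$ orthogonal-skip blocks $\mathbf{x}_{k+1}=\mathbf{Q}\mathbf{x}_{k}+\mathcal{F}_{k}(\mathbf{x}_{k})$ for $k=0,\ldots,N-1$, followed by global pooling and an affine classifier $\mathbf{x}_{N}\mapsto\mathbf{W}\mathbf{x}_{N}+\mathbf{b}$. I will re-parameterize the intermediate features by $\tilde{\mathbf{x}}_{k}:=(\mathbf{Q}^{\top})^{k}\mathbf{x}_{k}$, which is well-defined because $\mathbf{Q}^{\top}=\mathbf{Q}^{-1}$ and every power $\mathbf{Q}^{k}$ is orthogonal by Property~\ref{prop:orthogonalmatrixproduct}. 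In particular $\tilde{\mathbf{x}}_{0}=\mathbf{x}_{0}$, so the two networks share the same input.

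The key step is to verify that the re-parameterized features satisfy an identity-skip recursion. Substituting $\mathbf{x}_{k}=\mathbf{Q}^{k}\tilde{\mathbf{x}}_{k}$ into the original block and left-multiplying both sides by $(\mathbf{Q}^{\top})^{k+1}$ yields $\tilde{\mathbf{x}}_{k+1}=\tilde{\mathbf{x}}_{k}+\mathbf{Q}_{2}^{(k)}\,\mathcal{F}_{k}\!\left(\mathbf{Q}_{1}^{(k)}\tilde{\mathbf{x}}_{k}\right)$ with $\mathbf{Q}_{1}^{(k)}:=\mathbf{Q}^{k}$ and $\mathbf{Q}_{2}^{(k)}:=(\mathbf{Q}^{\top})^{k+1}$, both orthogonal. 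This is exactly the identity-skip block of Figure~\ref{fig:blocks:orthogonal}, with the new regular branch $\tilde{\mathcal{F}}_{k}(\mathbf{u}):=\mathbf{Q}_{2}^{(k)}\,\mathcal{F}_{k}\!\left(\mathbf{Q}_{1}^{(k)}\mathbf{u}\right)$. Stacking these $N$ identity-skip blocks gives the candidate network. At its top we have $\mathbf{x}_{N}=\mathbf{Q}^{N}\tilde{\mathbf{x}}_{N}$, so the original output equals $\mathbf{W}\mathbf{x}_{N}+\mathbf{b}=(\mathbf{W}\mathbf{Q}^{N})\tilde{\mathbf{x}}_{N}+\mathbf{b}$; I therefore complete the new network with classifier weights $\mathbf{W}':=\mathbf{W}\mathbf{Q}^{N}$ and bias $\mathbf{b}$, and the two networks then produce the same output on every input $\mathbf{x}_{0}$.

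The main obstacle I expect is precisely this mismatch at the last feature map: the block-wise reparameterization alone only agrees with the original forward pass up to the orthogonal factor $\mathbf{Q}^{N}$, so some redefinition of the tail is unavoidable. Absorbing the factor into the affine classifier is the cleanest resolution and is valid because the pool--FC head is linear in its input. A completely symmetric alternative, $\tilde{\mathbf{x}}_{k}:=\mathbf{Q}^{N-k}\mathbf{x}_{k}$, would push the mismatch to the input side instead; either way one endpoint must carry the fixed orthogonal factor, and the argument above shows this is the only adjustment required.
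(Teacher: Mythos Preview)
Your argument is correct and is essentially the same change-of-coordinates construction as the paper's proof: both replace $\mathbf{x}_k$ by a power of $\mathbf{Q}$ times $\mathbf{x}_k$, which turns each $\mathbf{Q}$-skip block into an identity-skip block with pre/post orthogonal factors on the regular branch (exactly Figure~\ref{fig:blocks:orthogonal}). The only cosmetic difference is where the residual factor $\mathbf{Q}^{N}$ lands---the paper absorbs it into the initial convolution $\mathbf{W}_0$ (your ``symmetric alternative'' $\tilde{\mathbf{x}}_k=\mathbf{Q}^{N-k}\mathbf{x}_k$), whereas your primary version absorbs it into the final linear head; both are valid and you already note the equivalence.
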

\begin{proof}
We prove the theorem by considering the networks where the number of channels are not changed).
For the networks with the number of channels changed,
the proof is a little complex but similar.

The network with orthogonal transformations
($L$ building blocks) is mathematically formed
as below,
\begin{align}\label{xx}
\mathbf{x}^q_1 & = \mathbf{W}^q_0 \mathbf{x}_0, \\
\mathbf{x}^q_{i+1} & = \mathbf{Q} \mathbf{x}^q_i +
\mathcal{F}^q(\mathbf{x}^q_i, \mathcal{W}^q_i), i=1,2,\dots,L.
\end{align}

We construct a network
with identity transformations,
\begin{align}\label{xx}
\mathbf{x}_1 & = \mathbf{W}_0 \mathbf{x}_0, \\
\mathbf{x}_{i+1} & = \mathbf{x}_i + \mathcal{F}(\mathbf{x}_i, \mathcal{W}_i),
i=1,2,\dots,L,
\end{align}
satisfying
\begin{align}
\mathbf{W}_0 & =   \mathbf{Q}^L  \mathbf{W}^q_0, \\
\mathcal{F}(\mathbf{x}_i, \mathcal{W}_i) & = \mathbf{Q}^{L-i} \mathcal{F}^q(\mathbf{x}^q_i, \mathcal{W}^q_i) \mathbf{Q}^{i-L-1}.
\end{align}

It can be easily verified that
given an arbitrary input $\mathbf{x}_0$,
the final outputs of the two networks are the same:
$\mathbf{x}^p_{L+1} = \mathbf{x}_{L}$.

Thus, the theorem holds.
\end{proof}

\subsection{Idempotent transformations}
\label{sec:idempotenttransformations}
An idempotent transformation is a linear transformation
in which the transformation matrix is an idempotent matrix.
An idempotent matrix is a matrix which, when multiplied by itself, yields itself.
\begin{definition}[Idempotent matrix]
The matrix $\mathbf{P}$
is idempotent if and only if $\mathbf{P}\mathbf{P} = \mathbf{P}$,
or equivalently $\mathbf{P}^k = \mathbf{P}$,
where $k$ is a positive integer.
\end{definition}

The forward process (Equation~\ref{eqn:forward})
becomes
\begin{equation}
\mathbf{x}_n = \mathbf{P}\mathbf{x}_m + \sum\nolimits_{i=m}^{n-1}\mathbf{P}^{n-i-1} \mathbf{x}'_{i+1} \label{eqn:forwardIdempotent},
\end{equation}
where $\mathbf{P}^{n-i-1} = \mathbf{I}$ when $n=i+1$,
and $\mathbf{P}$
otherwise.
This implies that
$\mathbf{x}_m$ is directly sent to
$\mathbf{x}_n$
through skip-connections that behave like a single skip-connection.

Similarly,
the backward process (Equation~\ref{eqn:backward}) becomes
\begin{equation}
\frac{\partial \mathcal{L}}{\partial \mathbf{x}_m} =
\mathbf{P}^\top \frac{\partial \mathcal{L}}{\partial \mathbf{x}_n}
+ \sum\nolimits_{i=m}^{n-1}\frac{\partial \mathbf{x}'_{i+1} }{\partial \mathbf{x}_l}  (\mathbf{P}^{n-i-1})^\top \frac{\partial \mathcal{L}}{\partial \mathbf{x}_n}  \label{eqn:backwardIdempotent},
\end{equation}
which again implies that
the gradient with respect to $\mathbf{x}_n$ is directly sent to
the gradient with respect to $\mathbf{x}_m$
through the skip-connections.

\noindent\textbf{Information maintenance.}
Different from
identity transformations and orthogonal transformations,
idempotent transformations maintain \emph{the vector
lying in the column space of~$\mathbf{P}$}:
\begin{align}
\mathbf{P}\mathbf{v} = \mathbf{v},
\end{align}
where $\mathbf{v} = \mathbf{P}\mathbf{x}$
and $\mathbf{x}$ is an arbitrary $d$-dimensional vector.

Apparently,
when a vector $\bar{\mathbf{v}}$ lies in the null space of the column space,
i.e., $\mathbf{P} \bar{\mathbf{v}} = 0$,
it looks that
the skip connections do not help to improve information flow.
Considering another term
in the right-hand side of Equation~\ref{eqn:forwardIdempotent},
\begin{align}
\mathbf{P}^{n-m-1}\mathbf{x}'_{m+1}
= \mathbf{P}^{n-m-1}\mathcal{F}(\mathbf{x}_m, \mathcal{W}_m)
= \mathbf{v}',
\end{align}
we can find $\mathcal{W}$,
if $\mathcal{F}$ is formed by convolutional layers
and ReLU layers,
such that
\begin{align}
\mathbf{P}\mathbf{v}' = \mathbf{v}' \neq \mathbf{0},
\label{eqn:secondpathidempotent}
\end{align}
which means that there is a path
along which the information
does not vanish.
Vanishing is still possible
though rare,
e.g., in the case
$\mathbf{P}\mathbf{x} + \mathcal{F}(\mathbf{x}, \mathcal{W})$
lies in the null space of $\mathbf{P}$.
There is similar analysis for gradient maintenance.

\noindent\textbf{Diagonalization.}
It is known that an idempotent matrix is diagonalizable:
$\mathbf{P} = \mathbf{U}^{-1}\boldsymbol{\Lambda} \mathbf{U}$,
where $\boldsymbol{\Lambda}$ is a diagonal matrix
whose diagonal entries are $0$ or $1$
and $\mathbf{U}$ is invertible.
We illustrate it in Figure~\ref{fig:blocks:idempotent}.
A network containing $L$ blocks formed with idempotent transformations
written as follows,
\begin{align}\label{xx}
\mathbf{x}^p_1 & = \mathbf{W}^p_0 \mathbf{x}_0, \\
\mathbf{x}^p_{i+1} & = \mathbf{P} \mathbf{x}^p_i +
\mathcal{F}^p(\mathbf{x}^p_i, \mathcal{W}^p_i), i=1,2,\dots,L,
\end{align}
can be transferred to a network
with skip connections formed by linear transformations
whose transformation matrix is a diagonal matrix $\boldsymbol{\Lambda}$ composed of $0$ and $1$:
\begin{align}\label{xx}
\mathbf{x}_1 & = \mathbf{U}  \mathbf{W}^p_0 \mathbf{x}_0, \\
\mathbf{x}_{i+1} & = \boldsymbol{\Lambda} \mathbf{x}_i + \mathbf{U} \mathcal{F}(\mathbf{x}_i, \mathcal{W}_i) \mathbf{U}^{-1},
i=1,2,\dots,L-1, \\
\mathbf{x}_{L+1} & = \mathbf{U}^{-1}(\boldsymbol{\Lambda} \mathbf{x}_L +  \mathbf{U}\mathcal{F}(\mathbf{x}_L, \mathcal{W}_L) \mathbf{U}^{-1}).
\end{align}

\subsection{Discussions}
We can easily show that
\emph{identity transformations are idempotent and orthogonal transformations
as $\mathbf{I}^k = \mathbf{I}$ (idempotent)
and $\mathbf{I}^\top \mathbf{I} = \mathbf{I}$ (orthogonal)}.
Here we discuss a bit more on feature reuse,
gradient back-propagation,
and extensions.

\noindent\textbf{Feature reuse.}
We have generalized identity transformations
to orthogonal transformations
and idempotent transformations,
to eliminate information vanishing and explosion.
One point we want to make clearer
is that Equations~\ref{eqn:forwardidentity},~\ref{eqn:forwardorthogonal}
and~\ref{eqn:forwardIdempotent}
(for forward propagation)
hold for any $\mathbf{x}_{m}$,
$m=1, 2, \dots, n-1$.
In other words,
$\mathbf{x}_n$ reuses
all the previous features:
$\mathbf{x}_{1}, \mathbf{x}_{2}, \dots, \mathbf{x}_{n-1}$
rather than only $\mathbf{x}_{n-1}$.
We have similar observation on gradient reuse.

\noindent\textbf{Gradient back-propagation.}
Considering the network without skip-connections,
back-propagating the gradient $\mathbf{g}$
through $L$ regular connections
yields the gradient $\mathbf{g}_1$ with respect to
$\mathbf{x}_1$:
$\mathbf{g}_1 = (\prod_{i=1}^L \frac{\partial \mathcal{F}(\mathbf{x}_i, \mathcal{W})}{\partial \mathbf{x}_i}) \mathbf{g}$.
With linear transformations as skip-connections,
$\mathbf{g}_1 = \prod_{i=1}^L (\mathbf{P}^\top +
\frac{\partial \mathcal{F}(\mathbf{x}_i, \mathcal{W})}{\partial \mathbf{x}_i}) \mathbf{g}$.
One reason for gradient vanishing
($\mathbf{g}_1 \approx \mathbf{0}$)
means that $\mathbf{g}$ lies in the null space of
$\prod_{i=1}^L \frac{\partial \mathcal{F}(\mathbf{x}_i, \mathcal{W})}{\partial \mathbf{x}_i} \approx \mathbf{0}$.
Adding a proper $\mathbf{P}^\top$ to each $\frac{\partial \mathcal{F}(\mathbf{x}_i, \mathcal{W})}{\partial \mathbf{x}_i}$
in some sense shrinks the null space,
and reduces the chance of gradient vanishing.
It is as expected that a transformation with higher-rank $\mathbf{P}$
leads to lower chance of gradient vanishing.
We empirically justify it in Figure~\ref{fig:rankaccuracy}.

\noindent\textbf{Extension of idempotent transformations.}
We extend idempotent transformations:
$\mathbf{P} = \mathbf{U}^{-1} \boldsymbol{\Lambda}\mathbf{U}$,
by relaxing the diagonal entries (eigenvalues) in $\boldsymbol{\Lambda}$.
The relaxed conditions are
(i) the absolute values of diagonal entries are not larger than $1$
and (ii) there is as least one diagonal entry whose absolute value is $1$.
Considering a special case that the absolute values of eigenvalues are only $0$ or $1$,
the absolute vector in the column space of $\mathbf{P}$ is maintained
is: $|\mathbf{P}^k\mathbf{v}| = |\mathbf{v}|$.
A typical example is a periodic matrix:
$\mathbf{P}^{N+1} = \mathbf{P}$,
where $N$ is a positive integer.

\subsection{Multi-branch networks}
The multi-branch networks have been studied
in many recent works~\cite{AbdiN16, TargAL16, XieGDTH16, ZhaoWLTZ16}.
We study the application of
the two linear transformations
to multi-branch structures.

\noindent\textbf{Orthogonal transformations.}
Theorem~\ref{thm:equivalence} shows
how orthogonal transformations
are transformed to identity transformations.
and still holds
for multi-branch structures.
Figure~\ref{fig:blocks:twobranch1} depicts
the regular connection in the block converted
from a block with an orthogonal transformation
for multiple branches.
We can see that the converted regular connection
cannot be separated into multiple branches
(as shown in Figure~\ref{fig:blocks:twobranch2})
because of two extra transformations:
pre-transformation $\mathbf{Q}^{i-L-1}$
(shortened as $\mathbf{T}_1$ in Figure~\ref{fig:blocks:twobranch1})
and post-transformation $\mathbf{Q}^{L-i}$
(shortened as $\mathbf{T}_2$ in Figure~\ref{fig:blocks:twobranch1}).
The two transformations are essentially
$1 \times 1$ convolutions,
which exchange the information
across the branches.
Without the two transformations
(e.g., in residual networks using identity transformation),
there is no interaction
across these branches.

\noindent\textbf{Idempotent transformations.}
We have shown that idempotent transformations
can be transformed to diagonalized idempotent transformations.
There are two extra transformations in regular connections:
pre-transformation $\mathbf{U}^{-1}$
and post-transformation $\mathbf{U}$
(see Figure~\ref{fig:blocks:idempotent}).
But the diagonal entries of the diagonal idempotent matrix
are $0$ and $1$.
Compared to identity transformations,
the regular connection contains two extra $1\times 1$ convolutions,
which is similar to orthogonal transformations
and results in information exchange across the branches.

\section{Experiments}
\subsection{Datasets}
\noindent\textbf{CIFAR.}
CIFAR-$10$ and CIFAR-$100$~\cite{Alex2009} are
subsets of the $80$ million tiny image database~\cite{TorralbaFF08}.
Both datasets contain $60,000$ $32 \times 32$ color images
with $50,000$ training images and $10,000$ testing images.
The CIFAR-$10$ dataset includes $10$ classes,
each containing $6,000$ images,
$5,000$ for training and $1,000$ for testing.
The CIFAR-$100$ dataset includes $100$ classes,
each containing $600$ images,
$500$ for training and $100$ for testing.
We follow a standard data augmentation scheme widely used for this dataset~\cite{HeZRS16, HuangLW16a, HuangSLSW16, LeeXGZT15, LinCY13, XieGDTH16}: We first zero-pad the images with $4$ pixels
on each side, and then randomly crop them to produce $32\times32$ images,
followed by horizontally mirroring half of the images.
We normalize the images by using the channel means and standard deviations.

\noindent\textbf{SVHN.}
The Street View House Numbers (SVHN) dataset is obtained from house numbers in Google Street View images.
It contains $73,257$ training images, $26,032$ testing images and $531,131$ additional training images. Following~\cite{HuangSLSW16, LeeXGZT15, LinCY13}, we select out $400$ samples per class from the training set
and $200$ samples from the additional set,
and use the remaining images
as the training set without any data augmentation.

\subsection{Setup}
\noindent\textbf{Networks.}
The network starts with
a $3\times3$ convolutional layer,
$3$ stages,
where each stage contains $K$ building blocks
and
there are two downsampling layers,
and a global pooling layer followed by a fully-connected layer
outputting the classification result.

In our experiments,
we consider $3$ kinds of regular connections forming building blocks:
(a) single branch,
(b) $4$ branches,
and (c) depthwise convolution
(an extreme multi-branch connection,
each branch contains one channel).
Each branch consists of batch normalization,
convolution, batch normalization,
ReLU and convolution (BN ),
which is similar to the pre-activation residual connection~\cite{HeZRS16ECCV}.
We empirically study two idempotent transformations
and two orthogonal transformations
and compare them with identity transformations.

\noindent\emph{Idempotent transformations:}
The first one is a merge-and-run style~\cite{ZhaoWLTZ16, ZhaoWLTZ17},
denoted by Idempotent-MR
\begin{align}
\mathbf{P}_{\texttt{MR}}=\frac{1}{B}\begin{bmatrix}
\mathbf{I} & \cdots & \mathbf{I} \\
\vdots & \; & \vdots \\
\mathbf{I} & \cdots & \mathbf{I} \\
\end{bmatrix},
\end{align}
which
is a block matrix containing $B \times B$ blocks
with $B$ being the number of branches,
and each block is an identity matrix.
The second one is obtained
by subtracting $\mathbf{P}_{\texttt{MR}}$ from the identity matrix:
\begin{align}
\mathbf{P}_{\overline{\texttt{MR}}} = \mathbf{I} - \mathbf{P}_{{\texttt{MR}}},
\end{align}
which is named as Idempotent-CMR (c=complement).
The ranks of the two matrices
are $\frac{R}{B}$ and $R - \frac{R}{B}$,
where $R$ is the size of the matrix or
the total number of channels.

\emph{Orthogonal transformations:}
The first one is built from Kronecker product:
$\mathbf{P}=\mathbf{M} \otimes \mathbf{M} \cdots \otimes \mathbf{M}$,
where $\otimes$ is the Kronecker product operation,
and
\begin{align}
\mathbf{M} = \frac{1}{\sqrt{2}}
   \begin{bmatrix}
	 1 & -1 \\
	 1 & 1
   \end{bmatrix}.
\end{align}
We name it Orthogonal-TP.
The second one is
a random orthogonal transformation,
named Orthogonal-Random,
also constructed using Kronecker product.
In each block, we generate different orthogonal transformations.

\begin{wrapfigure}{r}{0pt}
\includegraphics[width = 0.4\textwidth]{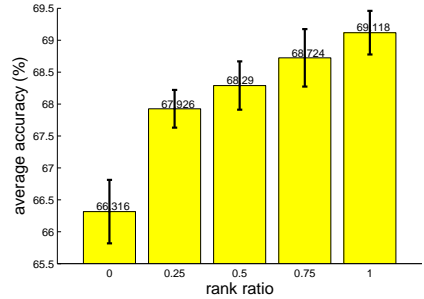}
\caption{Illustrating how the CIFAR-$100$ classification
accuracy changes
when changing the rank of the idempotent matrix
over a $20$-layer network.
(changing $B$ when designing $\mathbf{P}_{\texttt{MR}}$).
$0$ corresponds to that there is no skip-connection.}
\label{fig:rankaccuracy}
\end{wrapfigure}

\noindent\textbf{Training.}
We use the SGD algorithm with the Nesterov momentum
to train all the networks for $300$ epochs
on CIFAR-$10$/CIFAR-$100$ and $40$ epochs
on SVHN,
both with a total mini-batch size $64$.
The initial learning rate is set to $0.1$,
and is divided by $10$ at $1/2$ and $3/4$ of the total number of training epochs.
Following residual networks~\cite{HeZRS15},
the weight decay is $0.0001$,
the momentum is $0.9$,
and the weights are initialized as in residual networks~\cite{HeZRS15}.
Our implementation is based on Keras and TensorFlow~\cite{AbadiABBCCCDDDG16}.

\subsection{Results}
\noindent\textbf{Single-branch.}
We compare four skip-connections:
Identity transformation, Idempotent-CMR, Orthogonal-TP and Orthogonal-Random.
To form the idempotent matrix for Idempotent-CMR,
we set $B$ to be the number of the channels,
i.e., $\mathbf{P}_{{\texttt{MR}}}$ is a matrix
with all entries being $\frac{1}{B}$.
We do not evaluate Idempotent-MR because in this case
its rank is only $1$,
whose performance is expected to be low.
In general, idempotent transformations
with lower ranks
perform worse than
those with higher ranks.
This is empirically verified in Figure~\ref{fig:rankaccuracy}.

Table~\ref{table:single-branch}
shows the results
over networks of
depth $20$ and $56$,
containing $9$ and $27$ building blocks,
respectively.
One can see that the results with idempotent transformations
and orthogonal transformations are similar
to those with identity transformations:
empirically demonstrating
that idempotent transformations
and orthogonal transformations
improve information flow.

\begin{table}[t]
\caption{Comparing classification accuracies
of identity transformations,
idempotent transformations,
and orthogonal transformations
for single-branch regular connections.
$a+b$: $a$ is the average accuracy over five runs
and $b$ is the standard deviation.}
\footnotesize
		  \label{table:single-branch}
		  \centering
		  \begin{tabular}{llllll}
\hline
				\multirow{3}{*}{Depth}&\multirow{3}{*}{Mapping}&\multirow{3}{*}{Width}
				&\multicolumn{3}{c}{Accuracy}\\
				\cline{4-6}
				&&& CIFAR-$10$& CIFAR-$100$&SVHN\\
\hline
				\multirow{4}{*}{20}
				&Identity&16,32,64 & $92.468\pm0.171$ & $\mathbf{69.118\pm0.341}$ & $97.574\pm0.109$\\
				&Idempotent-CMR&16,32,64 & $92.580\pm0.088$ & $69.048\pm0.545$ & $97.570\pm0.056$\\
				&Orthogonal-TP&16,32,64 & $\mathbf{92.628\pm0.158}$ & $69.016\pm0.509$ & $\mathbf{97.616\pm0.068}$\\
				&Orthogonal-Random&16,32,64 & $92.598\pm0.097$ & $68.716\pm0.305$ & $97.548\pm0.087$\\
\hline
				\multirow{4}{*}{56}
				&Identity&16,32,64 & $94.396\pm0.200$ & $\mathbf{73.410\pm0.350}$ & $97.794\pm0.083$\\
				&Idempotent-CMR&16,32,64 & $\mathbf{94.452\pm0.267}$ & $73.058\pm0.210$ & $\mathbf{97.794\pm0.075}$\\
				&Orthogonal-TP&16,32,64 & $94.316\pm0.299$ & $73.352\pm0.312$ & $97.778\pm0.081$\\
				&Orthogonal-Random&16,32,64 & $94.400\pm0.206$ & $73.288\pm0.209$ & $97.774\pm0.059$\\
\hline
			\end{tabular}
		\end{table}

\begin{table}[t]
\caption{Comparing classification accuracies
for multiple-branch regular connections.
The width of each branch is the same
and the total width is described in the table.}
\footnotesize
		  \label{table:4-branch}
		  \centering
		  \begin{tabular}{llllll}
\hline
				\multirow{3}{*}{Depth}&\multirow{3}{*}{Mapping}&\multirow{3}{*}{Width}
				&\multicolumn{3}{c}{Accuracy}\\
				\cline{4-6}
				&&& CIFAR-$10$ & CIFAR-$100$ &SVHN\\
\hline
				\multirow{4}{*}{20}
				&Identity&32,64,128 & $91.416\pm0.308$ & $67.002\pm0.153$ & $97.454\pm0.095$\\
				&Idempotent-CMR&32,64,128 & $92.186\pm0.238$ & $68.520\pm0.328$ & $97.542\pm0.040$\\
				&Idempotent-MR&32,64,128 & $92.454\pm0.101$ & $\mathbf{69.067\pm0.242}$ & $\mathbf{97.594\pm0.051}$\\
				&Orthogonal-TP&32,64,128 & $\mathbf{92.538\pm0.256}$ & $69.044\pm0.439$ & $97.546\pm0.058$\\
				&Orthogonal-Random&32,64,128 & $92.526\pm0.147$ & $68.668\pm0.432$ & $97.568\pm0.025$\\
				\midrule
				\multirow{4}{*}{32}
				&Identity&32,64,128 & $-$ & $68.158\pm0.119$ & $-$\\
				&Idempotent-CMR&32,64,128 & $-$ & $70.372\pm0.419$ & $-$\\
				&Idempotent-MR&32,64,128 & $-$ & $\mathbf{70.942\pm0.331}$ & $-$\\
				&Orthogonal-TP&32,64,128 & $-$ & $70.786\pm0.370$ & $-$\\
				&Orthogonal-Random&32,64,128 & $-$ & $70.926\pm0.307$ & $-$\\
\hline
				\multirow{4}{*}{56}
				&Identity&32,64,128 & $92.502\pm0.224$ & $69.524\pm0.346$ & $97.616\pm0.102$\\
				&Idempotent-CMR&32,64,128 & $93.724\mp0.135$ & $72.192\pm0.304$ & $97.766\pm0.102$\\
				&Idempotent-MR&32,64,128 & $93.858\pm0.136$ & $72.444\pm0.143$ & $\mathbf{97.816\pm0.069}$\\
				&Orthogonal-TP&32,64,128 & $93.780\pm0.289$ & $\mathbf{72.716\pm0.314}$ & $97.724\pm0.079$\\
				&Orthogonal-Random&32,64,128 & $\mathbf{93.886\pm0.125}$ & $72.476\pm0.264$ & $97.784\pm0.052$\\
\hline
				\multirow{4}{*}{110}
				&Identity&32,64,128 & $-$ & $71.112 \pm 0.375$ & $-$\\
				&Idempotent-CMR&32,64,128 & $-$ & $73.822 \pm 0.483$ & $-$\\
				&Idempotent-MR&32,64,128 & $-$ & $73.584\pm0.039$ & $-$\\
				&Orthogonal-TP&32,64,128 & $-$ & $74.086\pm0.333$ & $-$\\
				&Orthogonal-Random&32,64,128 & $-$ & $74.010 \pm 0.299$ & $-$\\
\hline
			\end{tabular}
		\end{table}

\noindent\textbf{Four-branch.}
We compare the results over
the networks,
where each regular connection consists of four branches.
The results are shown in Table \ref{table:4-branch}.
One can see that the idempotent and orthogonal transformations
perform better than identity transformations.
The reason is that compared to identity transformations,
the designed idempotent
and orthogonal transformations
introduce interactions across the four branches.

\noindent\textbf{Depth-wise.}
We evaluate the performance
over extreme multi-branch networks:
depth-wise networks,
where each branch only contains a single channel.
Table \ref{table:depthwise}
shows the results.
One can see that the comparison
is consistent to the $4$-branch case.
		\begin{table}[t]
		  \caption{Comparing classification accuracies
for depth-wise networks.}
		  \label{table:depthwise}
\footnotesize
		  \centering
		  \begin{tabular}{llllll}
\hline
				\multirow{3}{*}{Depth}&\multirow{3}{*}{Mapping}&\multirow{3}{*}{Width}
				&\multicolumn{3}{c}{Accuracy}\\
				\cline{4-6}
				&&&CIFAR-$10$ &CIFAR-$100$ &SVHN\\
			\hline
				\multirow{4}{*}{20}
				&Identity&64, 128,256 & $83.306\pm0.243$ & $58.388\pm0.303$ & $93.938\pm0.138$\\
				&Idempotent-CMR&64,128,256 & $84.990\pm0.389$ & $\mathbf{60.360\pm0.260}$ & $95.488\pm0.261$\\
				&Orthogonal-TP&64,128,256 & $\mathbf{85.456\pm0.270}$ & $58.828\pm0.225$ & $\mathbf{95.870\pm0.065}$\\
				&Orthogonal-Random&64,128,256 & $84.800\pm0.555$ & $58.262\pm0.415$ & $95.816\pm0.079$\\
		\hline
				\multirow{4}{*}{56}
				&Identity&64,128,256 & $85.048\pm0.354$ & $60.242\pm0.443$ & $94.938\pm0.099$\\
				&Idempotent-CMR&64,128,256 & $87.658\pm0.217$ & $\mathbf{63.068\pm0.396}$ & $96.818\pm0.026$\\
				&Orthogonal-TP&64,128,256 & $87.932\pm0.246$ & $62.836\pm0.465$ & $\mathbf{97.134\pm0.027}$\\
				&Orthogonal-Random&64,128,256 & $\mathbf{88.218\pm0.207}$ & $62.152\pm0.312$ & $97.058\pm0.055$\\
\hline
			\end{tabular}
		\end{table}

\section{Conclusions}
We introduce two linear transformations,
orthogonal and idempotent transformations,
which, we show theoretically and empirically,
behave like identity transformations,
improving information flow
and easing the training.
One interesting point is that the success stems from
feature and gradient reuse
through the express way composed of skip-connections,
for maintaining the information during flow
and eliminating the gradient vanishing problem.

\bibliographystyle{plain}
\bibliography{References}

\begin{thebibliography}{10}

\bibitem{AbadiABBCCCDDDG16}
Mart{\'{\i}}n Abadi, Ashish Agarwal, Paul Barham, Eugene Brevdo, Zhifeng Chen,
  Craig Citro, Gregory~S. Corrado, Andy Davis, Jeffrey Dean, Matthieu Devin,
  Sanjay Ghemawat, Ian~J. Goodfellow, Andrew Harp, Geoffrey Irving, Michael
  Isard, Yangqing Jia, Rafal J{\'{o}}zefowicz, Lukasz Kaiser, Manjunath Kudlur,
  Josh Levenberg, Dan Man{\'{e}}, Rajat Monga, Sherry Moore, Derek~Gordon
  Murray, Chris Olah, Mike Schuster, Jonathon Shlens, Benoit Steiner, Ilya
  Sutskever, Kunal Talwar, Paul~A. Tucker, Vincent Vanhoucke, Vijay Vasudevan,
  Fernanda~B. Vi{\'{e}}gas, Oriol Vinyals, Pete Warden, Martin Wattenberg,
  Martin Wicke, Yuan Yu, and Xiaoqiang Zheng.
\newblock Tensorflow: Large-scale machine learning on heterogeneous distributed
  systems.
\newblock {\em CoRR}, abs/1603.04467, 2016.

\bibitem{AbdiN16}
Masoud Abdi and Saeid Nahavandi.
\newblock Multi-residual networks.
\newblock {\em CoRR}, abs/1609.05672, 2016.

\bibitem{ZhaoWLTZ17}
Anonymous.
\newblock Deep convolutional neural networks with merge-and-run mappings.
\newblock {\em Under review}.

\bibitem{clevert2015fast}
Djork-Arn{\'e} Clevert, Thomas Unterthiner, and Sepp Hochreiter.
\newblock Fast and accurate deep network learning by exponential linear units
  (elus).
\newblock {\em arXiv preprint arXiv:1511.07289}, 2015.

\bibitem{glorot2010understanding}
Xavier Glorot and Yoshua Bengio.
\newblock Understanding the difficulty of training deep feedforward neural
  networks.
\newblock In {\em Aistats}, volume~9, pages 249--256, 2010.

\bibitem{HeZRS15}
Kaiming He, Xiangyu Zhang, Shaoqing Ren, and Jian Sun.
\newblock Delving deep into rectifiers: Surpassing human-level performance on
  imagenet classification.
\newblock In {\em {ICCV}}, pages 1026--1034. {IEEE} Computer Society, 2015.

\bibitem{HeZRS16}
Kaiming He, Xiangyu Zhang, Shaoqing Ren, and Jian Sun.
\newblock Deep residual learning for image recognition.
\newblock In {\em {CVPR}}, pages 770--778. {IEEE} Computer Society, 2016.

\bibitem{HeZRS16ECCV}
Kaiming He, Xiangyu Zhang, Shaoqing Ren, and Jian Sun.
\newblock Identity mappings in deep residual networks.
\newblock In {\em European Conference on Computer Vision}, pages 630--645.
  Springer, 2016.

\bibitem{HuangLW16a}
Gao Huang, Zhuang Liu, and Kilian~Q. Weinberger.
\newblock Densely connected convolutional networks.
\newblock {\em CoRR}, abs/1608.06993, 2016.

\bibitem{HuangSLSW16}
Gao Huang, Yu~Sun, Zhuang Liu, Daniel Sedra, and Kilian~Q Weinberger.
\newblock Deep networks with stochastic depth.
\newblock In {\em European Conference on Computer Vision}, pages 646--661.
  Springer, 2016.

\bibitem{IoffeS15}
Sergey Ioffe and Christian Szegedy.
\newblock Batch normalization: Accelerating deep network training by reducing
  internal covariate shift.
\newblock {\em arXiv preprint arXiv:1502.03167}, 2015.

\bibitem{Alex2009}
Alex Krizhevsky.
\newblock Learning multiple layers of features from tiny images.
\newblock {\em Technical report}, 2009.

\bibitem{KrizhevskySH12}
Alex Krizhevsky, Ilya Sutskever, and Geoffrey~E. Hinton.
\newblock Imagenet classification with deep convolutional neural networks.
\newblock In Peter~L. Bartlett, Fernando C.~N. Pereira, Christopher J.~C.
  Burges, L{\'{e}}on Bottou, and Kilian~Q. Weinberger, editors, {\em {NIPS}},
  pages 1106--1114, 2012.

\bibitem{LarssonMS16a}
Gustav Larsson, Michael Maire, and Gregory Shakhnarovich.
\newblock Fractalnet: Ultra-deep neural networks without residuals.
\newblock {\em CoRR}, abs/1605.07648, 2016.

\bibitem{LeeXGZT15}
Chen{-}Yu Lee, Saining Xie, Patrick~W. Gallagher, Zhengyou Zhang, and Zhuowen
  Tu.
\newblock Deeply-supervised nets.
\newblock In Guy Lebanon and S.~V.~N. Vishwanathan, editors, {\em {AISTATS}},
  volume~38 of {\em {JMLR} Workshop and Conference Proceedings}. JMLR.org,
  2015.

\bibitem{LinCY13}
Min Lin, Qiang Chen, and Shuicheng Yan.
\newblock Network in network.
\newblock {\em CoRR}, abs/1312.4400, 2013.

\bibitem{mishkin2015all}
Dmytro Mishkin and Jiri Matas.
\newblock All you need is a good init.
\newblock {\em arXiv preprint arXiv:1511.06422}, 2015.

\bibitem{nair2010rectified}
Vinod Nair and Geoffrey~E Hinton.
\newblock Rectified linear units improve restricted boltzmann machines.
\newblock In {\em Proceedings of the 27th international conference on machine
  learning (ICML-10)}, pages 807--814, 2010.

\bibitem{neyshabur2015path}
Behnam Neyshabur, Ruslan~R Salakhutdinov, and Nati Srebro.
\newblock Path-sgd: Path-normalized optimization in deep neural networks.
\newblock In {\em Advances in Neural Information Processing Systems}, pages
  2422--2430, 2015.

\bibitem{RomeroBKCGB14}
Adriana Romero, Nicolas Ballas, Samira~Ebrahimi Kahou, Antoine Chassang, Carlo
  Gatta, and Yoshua Bengio.
\newblock Fitnets: Hints for thin deep nets.
\newblock {\em CoRR}, abs/1412.6550, 2014.

\bibitem{SimonyanZ14a}
Karen Simonyan and Andrew Zisserman.
\newblock Very deep convolutional networks for large-scale image recognition.
\newblock {\em CoRR}, abs/1409.1556, 2014.

\bibitem{srivastava2015training}
Rupesh~K Srivastava, Klaus Greff, and J{\"u}rgen Schmidhuber.
\newblock Training very deep networks.
\newblock In {\em Advances in neural information processing systems}, pages
  2377--2385, 2015.

\bibitem{SzegedyLJSRAEVR15}
Christian Szegedy, Wei Liu, Yangqing Jia, Pierre Sermanet, Scott Reed, Dragomir
  Anguelov, Dumitru Erhan, Vincent Vanhoucke, and Andrew Rabinovich.
\newblock Going deeper with convolutions.
\newblock In {\em Proceedings of the IEEE Conference on Computer Vision and
  Pattern Recognition}, pages 1--9, 2015.

\bibitem{TargAL16}
Sasha Targ, Diogo Almeida, and Kevin Lyman.
\newblock Resnet in resnet: Generalizing residual architectures.
\newblock {\em CoRR}, abs/1603.08029, 2016.

\bibitem{TorralbaFF08}
Antonio Torralba, Robert Fergus, and William~T. Freeman.
\newblock 80 million tiny images: {A} large data set for nonparametric object
  and scene recognition.
\newblock {\em {IEEE} Trans. Pattern Anal. Mach. Intell.}, 30(11):1958--1970,
  2008.

\bibitem{WangWZZ16}
Jingdong Wang, Zhen Wei, Ting Zhang, and Wenjun Zeng.
\newblock Deeply-fused nets.
\newblock {\em CoRR}, abs/1605.07716, 2016.

\bibitem{XieGDTH16}
Saining Xie, Ross~B. Girshick, Piotr Doll{\'{a}}r, Zhuowen Tu, and Kaiming He.
\newblock Aggregated residual transformations for deep neural networks.
\newblock {\em CoRR}, abs/1611.05431, 2016.

\bibitem{ZhaoWLTZ16}
Liming Zhao, Jingdong Wang, Xi~Li, Zhuowen Tu, and Wenjun Zeng.
\newblock On the connection of deep fusion to ensembling.
\newblock {\em CoRR}, abs/1611.07718, 2016.

\end{thebibliography}

\end{document}